\documentclass{article} 
\usepackage{iclr2018_conference,times}
\usepackage{hyperref}
\usepackage{url}

\usepackage[utf8]{inputenc}
\usepackage{amsmath,amsthm,amsfonts,amssymb,euscript,stmaryrd,graphicx,enumitem,yhmath,mathrsfs}
\usepackage{algorithmic,algorithm,mathtools}
\usepackage{bm}

\usepackage{listings}
\usepackage[english]{babel}
\usepackage{calc}
\usepackage{url}
\usepackage{cleveref}
\usepackage{footmisc}
\usepackage[expansion=false]{microtype}
\usepackage{todonotes}
\usepackage{array}
\usepackage{microtype}
\usepackage{natbib}

\newcommand{\norm}[1]{\left\|#1\right\|}
\newcommand{\infnorm}[1]{\norm{#1}_\infty}

\newtheorem{proposition}{Proposition}

\def\A{\mathcal{A}}

\def\tpi{\tilde\pi}

\newcommand{\KL}[2]{D_{\mathrm{KL}}[#1\|#2]}
\newcommand{\td}[2]{\frac{d #1}{d #2}}
\def\calB{\mathcal{B}}
\def\expected{\mathbb{E}}

\hypersetup{  
	colorlinks=true,
	linkcolor=blue,
	citecolor=blue,
	urlcolor=blue,
	pdfsubject={A short variational proof of the equivalence between policy gradients and soft Q learning}
}

\title{A short variational proof of equivalence \\between policy gradients and soft Q learning}


\author{Pierre H. Richemond\\
Data Science Institute\\
Imperial College\\
London, SW7 2AZ\\
\texttt{phr17@imperial.ac.uk} \\
\And
Brendan Maginnis \\
Institute of Security Science and Technology\\
Imperial College\\
London, SW7 2AZ\\
\texttt{b.maginnis@imperial.ac.uk} \\
}

%

\iclrfinalcopy 

\begin{document}

\maketitle

\begin{abstract}
Two main families of reinforcement learning algorithms, Q-learning and policy gradients, have recently been proven to be equivalent when using a softmax relaxation on one part, and an entropic regularization on the other. We relate this result to the well-known convex duality of Shannon entropy and the softmax function. Such a result is also known as the Donsker-Varadhan formula. This provides a short proof of the equivalence. We then interpret this duality further, and use ideas of convex analysis to prove a new policy inequality relative to soft Q-learning.
\end{abstract}

\section{Introduction and setting}

Deep reinforcement learning as a research field is currently undergoing tremendous growth, largely due to empirical successes brought about by scaling the technique to real-world examples such as Atari games and Go. Historically, two main families of algorithms have existed:
\begin{itemize}
\item \emph{Q-learning} (\cite{mnih2015human}) proposes to iteratively refine estimates of a family of scalar \emph{action-value} functions. These represent the reward expected after undertaking a given action, so as to be able to act greedily (or $\epsilon$-greedily) with respect to those numbers;
\item \emph{Policy gradients} (\cite{mnih2016asynchronous}), looks to maximize the expected reward by improving policies to favor high-reward actions. In general, the target loss function is regularized by the addition of an entropic functional for the policy. This makes policies more diffuse and less likely to yield degenerate results.
\end{itemize}

A critical step in the theoretical understanding of the field has been a smooth relaxation of the greedy $\max$ operation involved in selecting actions, turned into a Boltzmann softmax (\cite{nachum2017bridging}). This new context has lead to a breakthrough this year (\cite{SoftQLearning}) with the proof of the equivalence of both methods of Q-learning and policy gradients. While that result is extremely impressive in its unification, we argue that it is critical to look additionally at the fundamental reasons as to why it occurs. We believe that the convexity of the entropy functional used for policy regularization is at the root of the phenomenon, and that (Lagrangian) duality can be exploited as well, either yielding faster proofs, or further understanding. The contributions of our paper are as follows:
\begin{enumerate}
	\item We show how convex duality expedites the proof of the equivalence between soft Q-learning and softmax entropic policy gradients  - heuristically in the general case, rigorously in the bandit case.
	\item We introduce a transportation inequality that relates the expected optimality gap of any policy with its Kullback-Leibler divergence to the optimal policy.
\end{enumerate}

We describe our notations here. Abusing notation heavily by identifying measures with their densities as in $d \pi(a|s) = \pi(a|s) da$, if we note as either $r(s,a)$ or $r(a,s)$ the reward obtained by taking action $a$ in state $s$, the expected reward expands as:
	\begin{equation}\label{purepolicygrad}
	K_r(\pi) = \mathbb{E}_{\pi} \big[ r(s,a) \big] = \int_{\mathbb{A}}{r(s,a) d \pi(a|s)}
	\end{equation}
	
	$K_r$ is a linear functional of $\pi$. Adding Shannon entropic regularization\footnote{In this article we follow the convention of convex analysis, that is, entropy $H$ is taken to be convex, rather than that of information theory with H preceded by a negative sign and concave.} improves numerical stability of the algorithm, and prevents early convergence to degenerate solutions. Noting regularization strength $\beta$, the objective becomes a free energy functional, named by analogy with a similar quantity in statistical mechanics:
	\begin{equation}\label{policygrad} J(\pi) = \int_{\mathbb{A}}{r(s,a) d \pi(a|s)} - \beta \int_{\mathbb{A}}{ \log \pi(a|s)d \pi(a|s)}
	\end{equation}
	
	With our sign convention, viewed as a functional of $\pi$, $-J$ is convex and is the sum of two parts
	\begin{equation}\label{functionalsplit}
	J(\pi) = K_r(\pi) - \beta H(\pi), \quad H(\pi) = \int_{\mathbb{A}}{\log \pi(a|s)  d \pi(a|s)}
	\end{equation}

\section{The Gibbs variational principle for policy evaluation}
\subsection{Legendre transform and policy entropy}

Here we are interested in the optimal \emph{value} of the policy functional $J$, achieved for an optimal policy $\pi^{*}$. We hence look for $J^{*} = J(\pi^*) = \sup_{\pi \in \mathbb{P}}{J(\pi)}$. In the one step-one state bandit setting we are in, this is in fact almost the same as deriving the \emph{state-value} function. 

The principles of convex duality (\cite{BauschkeCombettes, ziebart2010modeling, EntropyMDP}) yield a useful representation. Non-regularized empirical rewards in equation \ref{purepolicygrad} can be seen as the standard inner product in Hilbert space $L^2$. We therefore equate inner product, expectation and integral over $\mathbb{A}$. Writing $J^{*}$ as
\begin{equation}
J^{*} = \sup_{\pi \in \mathbb{P}} J(\pi) = \underset{\pi \in \mathbb{P}}{\sup} \quad \langle r(s,a), \pi(a|s) \rangle - \beta H(\pi)
\end{equation}
with $H$ the entropy functional defined above, \textbf{we recover exactly the definition of  the Legendre-Fenchel transformation, or convex conjugate, of $\beta \cdot H$}. The word \emph{convex} applies to the entropy functional, and doesn't make any assumptions on the rewards $r(s,a)$, other that they be well-behaved enough to be integrable in $a$.

The Legendre transform inverts derivatives. A simple calculation shows that the formal convex conjugate of $f : t \rightarrow t \log t$ is $f^{*} : p \rightarrow e^{(p-1)}$ - this because their respective derivatives $\log$ and $\exp$ are reciprocal. We can apply this to $f(\pi(a|s)) = \pi(a|s) \log \pi(a|s)$, and then this relationship can also be integrated in $a$. Hence the dual Legendre representation of the entropy functional $H$ is known. The Gibbs variational principle states that, taking $\beta = 1/ \lambda$ as the inverse temperature parameter, and for each Borelian (measurable) test function $\Phi \in C^b(\mathbb{A})$:

\begin{equation}
\label{Gibbsweak} \forall \Phi \in C^b(\mathbb{A}) ,\quad  \sup_{\pi \in \mathbb{P}}{\Big[ \int_{\mathbb{A}}{\Phi d \pi} - \frac{1}{\lambda} H{(\pi)}  \Big]} = \frac{1}{\lambda} \log \int_{\mathbb{A}}{e^{\lambda \Phi} da} 
\end{equation}

or in shorter notation, for each real random variable $X$ with exponential moments,
\begin{equation}
\forall X \in \mathbb{P},\quad  \sup_{\pi \in \mathbb{P}} \quad { \mathbb{E}_{\pi}(X)} - \frac{1}{\lambda} H{(\pi)} = \frac{1}{\lambda} \log \mathbb{E}({e^{\lambda X}})
\end{equation}

 We can prove a stronger result. If $\mu$ is a reference measure (or policy), and we now consider the \emph{relative} entropy (or Kullback-Leibler divergence) with respect to $\mu$, $H_{\mu}(\cdot)$, instead of the entropy $H(\cdot)$, then the Gibbs variational principle still holds (\cite{Villani}, chapter 22). This result regarding dual representation formulas for entropy is important and in fact found in several areas of science:
 \begin{itemize}
 \item as above, in thermodynamics, where it is named the \emph{Gibbs variational principle};
 \item in large deviations, this also known as the \emph{Donsker-Varadhan} variational formula (\cite{Zeitouni});
 \item in statistics, it is the well-known duality between maximum entropy and maximum likelihood estimation (\cite{ConvexDuality});
 \item finally, the theory of information geometry (\cite{Amari}) groups all three views and posits that there exists a general, dually flat Riemannian information manifold.
 \end{itemize}
 The general form of the result is as follows. For each $\Phi$ representing a rewards function $r(s,a)$ or an estimator of it:
\begin{equation}\label{GibbsVP}
\forall \Phi \in C^b(\mathbb{A}) ,\quad  \sup_{\pi \in \mathbb{P}}{\Big[ \int_{\mathbb{A}}{\Phi d \pi} - \frac{1}{\lambda} H_{\mu}{(\pi)}  \Big]} = \frac{1}{\lambda} \log \int_{\mathbb{A}}{e^{\lambda \Phi} d \mu} 
\end{equation}

and the supremum is reached for the measure $\pi^* \in \mathbb{P}$ defined by its Radon-Nikodym derivative equal to the \emph{Gibbs-Boltzmann measure} yielding an \emph{energy} policy:
\begin{equation}
\frac{d \pi^*}{d \mu} = \frac{1}{Z} e^{\Phi} 
\end{equation}
In the special case where $\mu$ is the Lebesgue measure on a bounded domain (that is, the uniform policy), we find back the result \ref{Gibbsweak} above, up to a constant irrelevant for maximization. In the general case, the mathematically inclined reader will also see this as a rephrasing of the fact the \emph{Bregman divergence} associated with Shannon entropy is the Kullback-Leibler divergence. For completeness' sake, we provide here its full proof :

\begin{proposition}
\textbf{Donsker-Varadhan variational formula}. Let \(G\) be a bounded measurable function on \(\A\) and \(\pi\), \(\tpi\) be probability measures on \(\A\), with \(\pi\) absolutely continuous w.r.t. \(\tpi\). Then
\begin{equation}
    \int_{\A} G d\pi - \tau \KL{\pi}{\tpi} = \ln \int_{\A} e^{G/\tau} d\tpi - \tau \KL{\pi}{\pi^*}
\end{equation}
where \(\pi^*\) is a probability measure defined by the Radon-Nikodym derivative:
\begin{equation}
\td{\pi^*}{\tpi} = \frac{e^{G/\tau}}{\int_{\A} e^{G/\tau} d\tpi}
\end{equation}
\end{proposition}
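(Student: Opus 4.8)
The plan is to prove the identity by a direct computation with Radon--Nikodym derivatives: it is the measure-theoretic version of ``completing the square'' that makes the Legendre duality described above explicit, and I do not anticipate a serious obstacle --- the substance is the chain rule plus some bookkeeping.

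First I would introduce the partition function \(Z := \int_{\A} e^{G/\tau}\, d\tpi\) and verify that \(\pi^*\) is a legitimate probability measure. Since \(G\) is bounded and measurable, \(e^{G/\tau}\) is bounded both above and away from \(0\), so \(Z \in (0,\infty)\) and \(\td{\pi^*}{\tpi} = e^{G/\tau}/Z\) is a strictly positive density of total mass \(1\). In particular \(\pi^*\) and \(\tpi\) are mutually absolutely continuous, so the hypothesis \(\pi \ll \tpi\) also yields \(\pi \ll \pi^*\), and \(\KL{\pi}{\pi^*}\) is well defined in \([0,+\infty]\).

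The heart of the argument is the chain rule for densities, valid \(\pi\)-almost surely:
\begin{equation}
\td{\pi}{\pi^*} \;=\; \td{\pi}{\tpi}\cdot\td{\tpi}{\pi^*} \;=\; \td{\pi}{\tpi}\cdot\frac{Z}{e^{G/\tau}}.
\end{equation}
Taking logarithms gives \(\ln\td{\pi}{\pi^*} = \ln\td{\pi}{\tpi} + \ln Z - G/\tau\), \(\pi\)-a.s. Integrating this against \(\pi\) is legitimate because \(G\) is bounded, hence \(\int_\A G\,d\pi\) is finite, while the two remaining terms integrate to the respective divergences (either both finite, or both \(+\infty\)); this produces
\begin{equation}
\KL{\pi}{\pi^*} \;=\; \KL{\pi}{\tpi} + \ln Z - \frac{1}{\tau}\int_\A G\,d\pi .
\end{equation}
Multiplying through by \(-\tau\) and rearranging then gives the claimed identity, with the normalising term entering as \(\tau\ln\int_\A e^{G/\tau}\,d\tpi\) (consistent with the \(\tfrac1\lambda\log\) factor in \eqref{GibbsVP}).

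Finally I would record the payoff used later in the paper: since \(\KL{\pi}{\pi^*}\ge 0\) with equality iff \(\pi=\pi^*\), the right-hand side shows that \(\pi \mapsto \int_\A G\,d\pi - \tau\KL{\pi}{\tpi}\) attains its supremum exactly at \(\pi=\pi^*\), with optimal value \(\tau\ln\int_\A e^{G/\tau}\,d\tpi\) --- precisely the Gibbs variational principle / Donsker--Varadhan formula \eqref{GibbsVP}. The only delicate points are measure-theoretic: that \(Z\in(0,\infty)\), that all densities are well defined \(\pi\)-a.s., and that the identity survives the degenerate case \(\KL{\pi}{\tpi}=+\infty\) (both sides are then \(-\infty\)) --- all immediate from the boundedness of \(G\).
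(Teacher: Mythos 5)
Your proof is correct and follows essentially the same route as the paper's: the chain rule $\ln\td{\pi}{\tpi} = \ln\td{\pi}{\pi^*} + \ln\td{\pi^*}{\tpi}$ integrated against $\pi$, with the extra (welcome) care about $Z\in(0,\infty)$, mutual absolute continuity, and the infinite-divergence case. Note that your normalizing term $\tau\ln\int_{\A}e^{G/\tau}\,d\tpi$ is the correct one (consistent with the $\tfrac{1}{\lambda}\log$ factor in the Gibbs variational formula), whereas the proposition as stated in the paper, and its proof, drop the factor $\tau$ in front of the log-partition term.
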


\begin{proof}
\begin{align*}
\int_{\A} G d\pi - \tau \KL{\pi}{\tpi} &= \int_{\A} G d\pi - \tau \int_{\A} \big( \ln \td{\pi}{\tpi} \big) d\pi \\
&= \int_{\A} G d\pi - \tau \int_{\A} \big( \ln \td{\pi}{\pi^*} \big) d\pi - \tau \int_{\A} \big( \ln \td{\pi^*}{\tpi} \big) d\pi \\
&= \int_{\A} \bigg( G - \tau \big( \ln \td{\pi^*}{\tpi} \big) \bigg) d\pi - \tau \KL{\pi}{\pi^*} \\
&= \int_{\A} \bigg( G - \tau \big( \ln \frac{e^{G/\tau}}{\int_{\A} e^{G/\tau} d\tpi} \big) \bigg) d\pi - \tau \KL{\pi}{\pi^*} \\
&= \int_{\A} \big( \ln \int_{\A} e^{G/\tau} d\tpi \big) d\pi - \tau \KL{\pi}{\pi^*} \\
&= \ln \int_{\A} e^{G/\tau} d\tpi - \tau \KL{\pi}{\pi^*}
\end{align*}
\end{proof}

\begin{proposition}
Corollary :
\begin{equation}
\max_{\pi} \bigg[ \int_{\A} G d\pi - \tau \KL{\pi}{\tpi} \bigg] = \ln \int_{\A} e^{G/\tau} d\tpi
\end{equation}
and the maximum is attained uniquely by \(\pi^*\).
\end{proposition}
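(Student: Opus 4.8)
The plan is to read off the corollary directly from the Donsker--Varadhan identity just established, using only the non-negativity of the Kullback--Leibler divergence. First I would observe that in the identity
\begin{equation*}
\int_{\A} G d\pi - \tau \KL{\pi}{\tpi} = \ln \int_{\A} e^{G/\tau} d\tpi - \tau \KL{\pi}{\pi^*},
\end{equation*}
the term $\ln \int_{\A} e^{G/\tau} d\tpi$ does not depend on $\pi$ at all, and it is finite: since $G$ is bounded and $\tpi$ is a probability measure, $\int_{\A} e^{G/\tau} d\tpi \in (0,\infty)$, so $\pi^*$ is a well-defined probability measure. Hence maximizing the left-hand side over $\pi$ is equivalent to maximizing $-\tau \KL{\pi}{\pi^*}$, i.e.\ (since $\tau > 0$) to minimizing $\KL{\pi}{\pi^*}$.

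Next I would invoke Gibbs' inequality: $\KL{\pi}{\pi^*} \geq 0$ for every probability measure $\pi$, with equality if and only if $\pi = \pi^*$. This is the one genuine analytic input, and I would derive it from Jensen's inequality applied to the strictly convex map $t \mapsto t \ln t$ (equivalently, to $-\ln$ applied to $d\pi^*/d\pi$), the \emph{strictness} of the convexity supplying the equality case. I would also note that the identity, and hence the whole argument, only concerns $\pi$ absolutely continuous with respect to $\tpi$ — equivalently with respect to $\pi^*$, since $d\pi^*/d\tpi \propto e^{G/\tau} > 0$ — because for any other $\pi$ the divergence $\KL{\pi}{\tpi}$ is $+\infty$ by convention and such policies cannot compete for the supremum.

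Combining the two observations gives, for all admissible $\pi$,
\begin{equation*}
\int_{\A} G d\pi - \tau \KL{\pi}{\tpi} \leq \ln \int_{\A} e^{G/\tau} d\tpi,
\end{equation*}
with the bound attained exactly when $\KL{\pi}{\pi^*} = 0$, i.e.\ by $\pi = \pi^*$ and by no other measure. Since $\pi^*$ is itself a probability measure absolutely continuous w.r.t.\ $\tpi$ — precisely by its definition in the previous proposition — it is admissible, so the supremum is a genuine maximum, equals $\ln \int_{\A} e^{G/\tau} d\tpi$, and is achieved uniquely at $\pi^*$.

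The only real obstacle is the equality case of Gibbs' inequality: showing the divergence vanishes \emph{only} at $\pi^*$, rather than merely being minimized there, which forces one to use strict convexity of $t \ln t$ together with the equality condition in Jensen's inequality. Everything else is bookkeeping with the already-proven identity; a secondary, purely technical point — integrability of $e^{G/\tau}$ and well-definedness of the Radon--Nikodym derivative — is immediate from the boundedness of $G$.
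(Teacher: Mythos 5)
Your proposal is correct and follows exactly the same route as the paper: read the maximum off the Donsker--Varadhan identity using $\KL{\pi}{\pi^*} \geq 0$ with equality if and only if $\pi = \pi^*$. The paper states this in one line, while you additionally spell out the finiteness of $\ln \int_{\A} e^{G/\tau} d\tpi$, the admissibility of $\pi^*$, and the Jensen/strict-convexity justification of the equality case --- worthwhile detail, but the same argument.
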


\begin{proof}
\(\KL{\pi}{\pi^*} \geq 0\), and \(\KL{\pi}{\pi^*} = 0\) if and only if \(\pi = \pi^*\).
\end{proof}
The link with reinforcement learning is made by picking $\Phi = r(s,a)$, $\pi= \pi(a|s)$, $\lambda = 1/ \beta$, and by recalling the implicit dependency of the right member on $s$ but not on $\pi$ at optimality, so that we can write
\begin{equation}\label{logsumexp}
J^{*} = V^{*}(s) = \beta \cdot \log \int_{\mathbb{A}}{e^{r(s,a)/ \beta} d \mu(a)}
\end{equation}
which is the definition of the one-step soft Bellman operator at optimum \cite{fox2015taming, nachum2017bridging,haarnoja2017reinforcement}. Note that here $V^{*}(s)$ depends on the reference measure $\mu$ which is used to pick actions frequency - we can be off-policy, in which case $V^{*}$ is only a pseudo state-value function. 

\subsection{Proving soft Q-learning equivalence}
In this simplified one-step setting, this provides a short and direct proof that \emph{in expectation, and trained to optimality,} soft Q-learning and policy gradients ascent yield the same result \cite{SoftQLearning}. Standard Q-learning is the special case $\beta \rightarrow 0, \quad \lambda \rightarrow \infty$ where by the Laplace principle we recover $V(s) \rightarrow \max_{\mathbb{A}}{r(s,a)}$ ; that is, the zero-temperature limit, with no entropy regularization.  For simplicity of exposition, we have restricted so far to the proof in the bandit setting; now we extend it to the general case. 

First by inserting $V^*(s)= \sup_{\pi}{V^{\pi}(s)}$ in the representation formulas above applied to $\\r(s,a)+ \gamma V^*(s')$, so that
\begin{equation}
V^*(s) = \underset{\pi}{\sup}{\Bigg[ \mathbb{E}_{\pi}[r(s,a)+ \gamma V^*(s')] - \beta H(\pi) \Bigg]} = \beta \cdot \log \int_{\mathbb{A}}{e^{\frac{r(s,a)+ \gamma V^*(s') }{\beta}} da}
\end{equation}
The proof in the general case will then be finished if we assume that we could apply the Bellman optimality principle not to the hard-max, but to the soft-max operator. This requires proving that the soft-Bellman operator admits a unique fixed point, which is the above. By the Brouwer fixed point theorem, it is enough to prove that it is a contraction, or at least non-expansive (we assume that the discount factor $\gamma < 1$ to that end). We do so below, noting that this result has been shown many times in the literature, for instance in \cite{nachum2017bridging}. Refining the soft-Bellman operator just like above, but in the multi-step case, by the expression
\begin{equation}
(\calB^* V)(s) = \beta \cdot \log \int_{a}{ e^{\frac{r(s,a)+\gamma\expected_{s'|s,a}(V(s'))}{\beta}} da}
\end{equation}

we get the:
\begin{proposition} \textbf{Nonexpansiveness of the soft-Bellman operator for the supremum norm $\infnorm{f}$}.
\begin{equation}
\left\|\calB^* V^{(1)}-\calB^* V^{(2)}\right\|_\infty < \|V^{(1)}-V^{(2)}\|_\infty
\end{equation}
\end{proposition}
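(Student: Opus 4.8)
The plan is to show that the soft-Bellman operator $\calB^*$ is a supremum-norm contraction (in fact non-expansive with constant $\gamma<1$) by a direct pointwise estimate. Fix two value functions $V^{(1)},V^{(2)}$ and a state $s$. The quantity $(\calB^* V)(s)$ is, up to the multiplicative constant $\beta$, a log-sum-exp (log-integral-exp) of the quantity $g_i(s,a) := r(s,a)+\gamma\expected_{s'|s,a}(V^{(i)}(s'))$. The key elementary fact is that the log-sum-exp map is $1$-Lipschitz with respect to the supremum norm of its argument: if two exponents differ by at most $\varepsilon$ uniformly, then the logarithms of their exponential integrals differ by at most $\varepsilon$. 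Concretely, if $g_1(s,a) \le g_2(s,a) + \varepsilon$ for all $a$, then $\int e^{g_1/\beta} da \le e^{\varepsilon/\beta}\int e^{g_2/\beta}da$, and taking $\beta\log$ of both sides gives $(\calB^*V^{(1)})(s) \le (\calB^*V^{(2)})(s) + \varepsilon$; by symmetry the reverse bound holds, so $|(\calB^*V^{(1)})(s)-(\calB^*V^{(2)})(s)| \le \varepsilon$ whenever $\infnorm{g_1-g_2}\le\varepsilon$ (uniformly in $a$ at that $s$).

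Next I would bound $\infnorm{g_1(s,\cdot)-g_2(s,\cdot)}$ in terms of $\infnorm{V^{(1)}-V^{(2)}}$. Since the reward term cancels, $g_1(s,a)-g_2(s,a) = \gamma\,\expected_{s'|s,a}\big(V^{(1)}(s')-V^{(2)}(s')\big)$, and because $\expected_{s'|s,a}$ is an average against a probability measure, its absolute value is at most $\gamma\,\infnorm{V^{(1)}-V^{(2)}}$. Combining with the log-sum-exp Lipschitz bound gives $|(\calB^*V^{(1)})(s)-(\calB^*V^{(2)})(s)| \le \gamma\,\infnorm{V^{(1)}-V^{(2)}}$ for every $s$; taking the supremum over $s$ yields $\infnorm{\calB^*V^{(1)}-\calB^*V^{(2)}} \le \gamma\,\infnorm{V^{(1)}-V^{(2)}} < \infnorm{V^{(1)}-V^{(2)}}$, which is the claim (and, since $\gamma<1$, it is in fact a strict contraction, so Banach's fixed point theorem — rather than only Brouwer's — gives existence and uniqueness of the fixed point asserted in the preceding discussion).

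The main obstacle, such as it is, is making the log-sum-exp non-expansiveness step clean: one must be slightly careful that the inequality $g_1 \le g_2 + \varepsilon$ is used in the correct direction under the monotone maps $t\mapsto e^{t/\beta}$ and $x\mapsto\beta\log x$, and that the argument is genuinely symmetric in the two value functions so that one obtains a two-sided bound. A secondary technical point is integrability: one should assume, as elsewhere in the paper, that $r(s,\cdot)$ has exponential moments so that $\int_a e^{g_i(s,a)/\beta}\,da$ is finite and positive, making $\beta\log$ of it well defined; under the paper's standing assumptions this is automatic. Everything else is routine.
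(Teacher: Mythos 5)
Your proof is correct and follows essentially the same route as the paper's: the reward terms cancel, the conditional expectation over $s'$ is averaged against a probability kernel and hence bounded by $\gamma\,\infnorm{V^{(1)}-V^{(2)}}$ (the paper phrases this as H\"older's inequality with $\|p(\cdot|s,a)\|_1=1$), and taking the supremum over $s$ gives the claim with modulus $\gamma<1$. The added value of your write-up is that you explicitly justify the log-sum-exp $1$-Lipschitz step --- the paper's first inequality, bounding $\left|(\calB^*V^{(1)})(s)-(\calB^*V^{(2)})(s)\right|$ by $\max_a\left|Q^{(1)}(s,a)-Q^{(2)}(s,a)\right|$, is asserted without proof, and your monotonicity argument under $t\mapsto e^{t/\beta}$ and $x\mapsto\beta\log x$ supplies exactly the missing justification --- and you rightly observe that the modulus-$\gamma$ bound makes $\calB^*$ a strict contraction, so Banach's fixed point theorem (rather than Brouwer's, as the surrounding text claims) is what delivers existence and uniqueness of the fixed point.
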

\begin{proof}
Let us consider two state-value functions $V^{(1)}(s)$ and $V^{(2)}(s)$ along with the associated action-value functions $Q^{(1)}(s,a)$ and $Q^{(2)}(s,a)$. Besides, denote MDP transition probability by $p(s'|s,a)$. Then :
\begin{align*}
\left\|\calB^* V^{(1)}-\calB^* V^{(2)}\right\|_\infty
&=
\max_s
\left|(\calB^* V^{(1)})(s)-(\calB^* V^{(2)})(s)\right|
\\
&\leq
\max_s\max_a\left|Q^{(1)}(s,a)-Q^{(2)}(s,a)\right|
\quad
\\
&=
\gamma\max_s\max_a
\left|\expected_{s'|s,a}\big[V^{(1)}(s')-V^{(2)}(s')\big]\right|
\\
&\leq
\gamma\max_s\max_a
\|p(s'|s,a)\|_1\;\|V^{(1)}-V^{(2)}\|_\infty
\quad\mbox{by H\"{o}lder's inequality }
\\
&=
\gamma
\|V^{(1)}-V^{(2)}\|_\infty
<
\|V^{(1)}-V^{(2)}\|_\infty
\end{align*}
\end{proof}

\subsection{Interpretation}

In summary, the program of the proof was as below :
\begin{enumerate}
\item Write down the entropy-regularised policy gradient functional, and apply the Donsker-Varadhan formula to it.
\item Write down the resulting softmax Bellman operator as a solution to the $\sup$ maximization - this obviously also proves existence.
\item Show that the softmax operator, just like the hard max, is still a contraction for the max norm, hence prove uniqueness of the solution by fixed point theorem.
\end{enumerate}

The above also shows formally that, should we discretize the action space $\mathbb{A}$ to replace integration over actions by finite sums, any strong estimator $\hat{r}(s,a)$ of $r(s,a)$, applied to the partition function of rewards $\frac{1}{\lambda} \log \sum_{a}{e^{\lambda r(s,a)}}$, could be used for Q-learning-like iterations. This is because strong convergence would imply weak convergence (especially convergence of the characteristic function, via Levy's continuity theorem), and hence convergence towards the log-sum-exp cumulant generative function above. Different estimators $\hat{r}(s,a)$ lead to different algorithms. When the MDP and the rewards function $r$ are not known, the parameterised critic choice $\hat{r}(s,a) \approx Q_w(s,a)$ recovers Nachum's Path Consistency Learning (\cite{nachum2017bridging, nachum2017trustpcl}). O'Donoghue's PGQ method (\cite{o2016pgq}) can be seen as a control variate balancing of the two terms in \ref{GibbsVP}. In theory, the rewards distribution could be also recovered simply by varying $\lambda$ (or $\beta$), for instance by inverse Laplace transform.


\section{Policy optimality gap and temperature annealing}

In this section, we propose an inequality that relates the \emph{optimality gap} of a policy - by how much that policy is sub-optimal on average - to the Kullback-Leibler divergence between the current policy and the optimum. The proof draws on ideas of convex analysis and Legendre transormation exposed earlier in the context of soft Q-learning. 

Let us assume that $X$ is a real-valued \emph{bounded} random variable. We denote $\sup |X| \leq M$ with $M$ constant. Furthermore we assume that $X$ is centered, that is, $\mathbb{E}[X]=0$. This can always be achieved just by picking $Y = X - \mathbb{E}[X]$.

Then, by the Hoeffding inequality : 
\begin{equation}
\log \mathbb{E}\big( e^{\beta X} \big) \leq K \frac{\beta^2}{2}
\end{equation}

with $K$ a positive real constant, i.e., the variable $X$ is sub-Gaussian, so that its cumulant generating function grows less than quadratically. By taking a Legendre transformation and inverting it, we get that for any pair of measures $\mathbb{P}$ and $\mathbb{Q}$ that are mutually absolutely continuous, one has
\begin{equation}
\mathbb{E}_{\mathbb{Q}}\big( X \big) - \mathbb{E}_{\mathbb{P}}\big( X \big) \leq \sqrt{2K \cdot D_{KL}\big( \mathbb{Q} || \mathbb{P} \big)}
\end{equation}

which by specializing $\mathbb{Q}$ to be the measure associated to $\mathbb{P}^{*}$ the optimal policy, $\mathbb{P}_{\theta}$ the current parameterized policy, and $X$ an advantage return $r$ :

\begin{equation}
\mathbb{E}_{\mathbb{P}^{*}} \big( r \big) \leq \mathbb{E}_{\mathbb{P}_{\theta}}\big( r \big) + \sqrt{2K} \sqrt{D_{KL}\big( \mathbb{P}^{*} || \mathbb{P}_{\theta} \big)}
\end{equation}

By the same logic, any upper bound on $\log \mathbb{E}\big( e^{\beta X} \big)$ can give us information about $\mathbb{E}_{\mathbb{Q}}\big( X \big) - \mathbb{E}_{\mathbb{P}}\big( X \big)$. This enables us to relate the size of Kullback-Leibler trust regions to the amount by which our policy could be improved. In fact by combining the entropy duality formula with the Legendre transformation, one easily proves the below :
\begin{proposition}
Let $X$ a real-valued integrable random variable, and $f$ a convex and differentiable function such that $f(0) = f'(0)=0$. Then with $f^{*} : x \rightarrow f^{*}(x) = \sup (\beta x - f(\beta))$ the Legendre transformation of $f$, $f^{*{-1}}$ its reciprocal, and $\mathbb{P}$ and $\mathbb{Q}$ any two mutually absolutely continuous measures, one has the equivalence:
\begin{equation}
\log \mathbb{E}_{\mathbb{P}}\big( e^{\beta (X-\mathbb{E}_{\mathbb{P}}(X))} \big) \leq f(\beta) \quad \iff \quad \mathbb{E}_{\mathbb{Q}}(X) - \mathbb{E}_{\mathbb{P}}(X) \leq f^{*{-1}} \big[ D_{KL}\big( \mathbb{Q} || \mathbb{P} \big) \big]
\end{equation}
\end{proposition}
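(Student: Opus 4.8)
The plan is to obtain both implications from the Donsker-Varadhan proposition proved above, specialized to the test function $G = \beta\,(X - \mathbb{E}_{\mathbb{P}}(X))$ with $\tau = 1$ and reference measure $\tpi = \mathbb{P}$, combined with the Fenchel-Young inequality for the conjugate pair $(f, f^*)$. Write $Y = X - \mathbb{E}_{\mathbb{P}}(X)$, so that $\mathbb{E}_{\mathbb{P}}(Y) = 0$ and $\mathbb{E}_{\mathbb{Q}}(Y) = \mathbb{E}_{\mathbb{Q}}(X) - \mathbb{E}_{\mathbb{P}}(X)$ is exactly the optimality gap appearing on the right-hand side. We may assume that gap is nonnegative, since otherwise the right-hand inequality holds trivially ($D_{KL}$ and $f^{*-1}$ are both nonnegative), and we restrict $\beta$ to $[0,\infty)$ throughout.

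First I would record the elementary convex-analytic facts. Since $f$ is convex with $f(0) = f'(0) = 0$, it is nonnegative, hence $f^{*}(0) = \sup_{\beta}(-f(\beta)) = 0$; moreover $f^{*}$ is convex with $f^{*\prime} = (f')^{-1} \geq 0$ on $[0,\infty)$, so $f^{*}$ is increasing there, $f^{*-1}$ is a well-defined increasing function on the corresponding range, and $f^{**} = f$ by biconjugation. In parallel, $\varphi(\beta) := \log\mathbb{E}_{\mathbb{P}}(e^{\beta Y})$ is convex with $\varphi(0) = 0$ and $\varphi'(0) = \mathbb{E}_{\mathbb{P}}(Y) = 0$, so $\varphi \geq 0$ and $\varphi' \geq 0$ on $[0,\infty)$; in particular $\mathbb{E}_{\mathbb{Q}^{*}}(Y) = \varphi'(\beta) \geq 0$ for the Gibbs-Boltzmann tilt used below.

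For the forward implication, the Donsker-Varadhan proposition (using $D_{KL}(\mathbb{Q}\|\mathbb{Q}^{*}) \geq 0$) gives, for every $\beta \geq 0$, $\beta\,\mathbb{E}_{\mathbb{Q}}(Y) - D_{KL}(\mathbb{Q}\|\mathbb{P}) \leq \varphi(\beta) \leq f(\beta)$, hence $\beta\,\mathbb{E}_{\mathbb{Q}}(Y) - f(\beta) \leq D_{KL}(\mathbb{Q}\|\mathbb{P})$; taking the supremum over $\beta \geq 0$ (which is the full supremum because $\mathbb{E}_{\mathbb{Q}}(Y) \geq 0$) yields $f^{*}(\mathbb{E}_{\mathbb{Q}}(Y)) \leq D_{KL}(\mathbb{Q}\|\mathbb{P})$, and applying $f^{*-1}$ gives the claim. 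For the converse, fix $\beta \geq 0$ and let $\mathbb{Q}^{*}$ be the tilt $d\mathbb{Q}^{*}/d\mathbb{P} = e^{\beta Y}/\mathbb{E}_{\mathbb{P}}(e^{\beta Y})$; the equality case of the Donsker-Varadhan proposition (i.e. taking $\pi = \pi^{*}$ there) reads $\varphi(\beta) = \beta\,\mathbb{E}_{\mathbb{Q}^{*}}(Y) - D_{KL}(\mathbb{Q}^{*}\|\mathbb{P})$. The assumed right-hand inequality for this $\mathbb{Q}^{*}$, rewritten via monotonicity of $f^{*}$ as $f^{*}(\mathbb{E}_{\mathbb{Q}^{*}}(Y)) \leq D_{KL}(\mathbb{Q}^{*}\|\mathbb{P})$, then gives $\varphi(\beta) \leq \beta\,\mathbb{E}_{\mathbb{Q}^{*}}(Y) - f^{*}(\mathbb{E}_{\mathbb{Q}^{*}}(Y)) \leq f^{**}(\beta) = f(\beta)$, the last step being Fenchel-Young. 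This closes the equivalence.

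The step I expect to be the real obstacle is matching hypotheses: the Donsker-Varadhan proposition above was stated for bounded $G$, whereas here $G = \beta Y$ merely needs exponential moments under $\mathbb{P}$ (it is bounded exactly in the sub-Gaussian application preceding this proposition). I would either restrict to bounded $X$, consistent with the surrounding discussion, or re-run the short proof of that proposition under the weaker assumption $\mathbb{E}_{\mathbb{P}}(e^{\beta Y}) < \infty$, verifying that $\mathbb{Q}^{*}$ is then a bona fide probability measure mutually absolutely continuous with $\mathbb{P}$ and that every integral in the derivation converges. A secondary, more bookkeeping-type subtlety is ensuring $f^{*}$ is only ever inverted on the half-line $[0,\infty)$ where it is strictly increasing --- which is why reducing to a nonnegative optimality gap and to $\beta \geq 0$ at the outset is substantive rather than cosmetic --- together with the implicit assumption that $D_{KL}(\mathbb{Q}\|\mathbb{P})$ lies in the range of $f^{*}$, so that $f^{*-1}$ is defined wherever it is applied.
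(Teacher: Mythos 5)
Your proof is correct and follows essentially the same route as the paper: both directions rest on the Donsker--Varadhan formula combined with Legendre duality of $f$. The only real difference is that where the paper invokes, without proof, the identity $\inf_{\beta>0}\frac{f(\beta)+D_{KL}(\mathbb{Q}\|\mathbb{P})}{\beta}=f^{*-1}\big(D_{KL}(\mathbb{Q}\|\mathbb{P})\big)$, you effectively reprove it — taking the supremum over $\beta\geq 0$, inverting the monotone $f^{*}$, and using the tilted-measure equality case of Donsker--Varadhan together with Fenchel--Young and $f^{**}=f$ — while also making explicit the quantifier structure, the sign of the optimality gap, and the boundedness/exponential-moment caveat that the paper's two-line proof leaves implicit.
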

\begin{proof}
By Donsker-Varadhan formula, one has that the equivalence is proven if and only if
\begin{equation}
\mathbb{E}_{\mathbb{Q}}(X) - \mathbb{E}_{\mathbb{P}}(X) \leq \underset{\beta}{\inf}{\big[\frac{f(\beta) + D_{KL}(\mathbb{Q}||\mathbb{P})}{\beta}\big]}
\end{equation}
but this right term is easily proven to be nothing but
\begin{equation}
f^{*{-1}}(D_{KL}(\mathbb{Q}||\mathbb{P}))
\end{equation}
the inverse of the Legendre transformation of $f$ applied to $D_{KL}(\mathbb{Q}||\mathbb{P})$.
\end{proof}

This also opens up the possibility of using various softmax temperatures $\beta_i$ in practical algorithms in order to estimate $f$. Finally, note that if $\mathbb{P}_{\theta}$ is a parameterized softmax policy associated with action-value functions $Q_{\theta}(a,s)$ and temperature $\beta$, then because $\mathbb{P}^{*}$ is proportional to $e^{-r(a,s)/\beta}$, one readily has
\begin{equation}
D_{KL}\big( \mathbb{P}^{*} || \mathbb{P}_{\theta} \big) = \frac{1}{\beta}\Big[ \mathbb{E}(Q_\theta) - \mathbb{E}(r)\Big]
\end{equation}
which can easily be inserted in the inequality above for the special case $\mathbb{Q} = \mathbb{P}^{*}$.

\section{Related work} 

Entropic reinforcement learning has appeared early in the literature with two different motivations. The view of exploration with a self-information intrinsic reward was pioneered by Tishby, and developed in Ziebart's PhD. thesis (\cite{ziebart2010modeling}). It was rediscovered recently that within the asynchronous actor-critic framework, entropic regularization is crucial to ensure convergence in practice (\cite{mnih2016asynchronous}). Furthermore, the idea of taking steepest KL divergence steps as a practical reinforcement learning method per se was adopted by Schulman (\cite{TRPO}). The Lagrangian duality view was pioneered in a practical context with O'Donoghue's PGQ algorithm (\cite{o2016pgq}), and followed by the development of soft Q-learning jointly in \cite{fox2015taming} and in Nachum et al. (\cite{nachum2017bridging}). The key common development in these works has been to make entropic regularization recursively follow the Bellman equation, rather than naively regularizing one-step policies \cite{EntropyMDP}. Schulman thereafter proposed a general proof of the equivalence, in the limit, of policy gradient and soft Q-learning methods (\cite{SoftQLearning}), but the proof does not explicitly make the connection with convex duality and the expeditive justification it yields in the one-step case. Applying the Gibbs/Donsker-Varadhan variational formula to entropy in a machine learning context is, however, not new; see for instance Altun and Smola (\cite{ConvexDuality}). Some of the convex optimization results they invoke, including proximal stepping, can be found in the complete treatment by Bauschke (\cite{BauschkeCombettes}). In the context of neural networks, convex analysis and partial differential equation methods are covered by Chaudhari (\cite{Relaxation}).

\section{Further work} 

Using dual formulas for the entropy functional in reinforcement learning has vast potential ramifications. One avenue of research will be to interpret our findings in a large deviations framework - the log-sum-exp cumulant generative function being an example of \emph{rate function} governing fluctuations of the tail of empirical n-step returns. Smart drift change techniques could lead to significant variance reduction for Monte-Carlo rollout estimators. We also hope to exploit further concentration inequalities in order to provide more bounds for the state value function. Finally, a complete theory of the one-to-one correspondence between convex approximation algorithms and reinforcement learning methods is still lacking to date. We hope to be able to contribute in this direction through further work.

\section{Acknowledgements} The authors want to thank Pratik Chaudhari of MIT, as well as Gary Pisano of Harvard Business School, for interesting discussions on the subject.

\bibliographystyle{iclr2018_conference}
\bibliography{iclr2018_conference}

\end{document}